%% file: acl_latex.tex
\documentclass[11pt]{article}

\PassOptionsToPackage{table}{xcolor}
\usepackage[final]{acl}

\usepackage{times}
\usepackage{latexsym}
\usepackage{amsmath,amssymb}
\usepackage{amsthm}
\newtheoremstyle{acltheorem}
  {\topsep}{\topsep}
  {\itshape}
  {0pt}
  {\bfseries}{.}
  {.5em}
  {\thmname{#1}~\thmnumber{#2}\thmnote{ (#3)}}
\theoremstyle{acltheorem}
\newtheorem{proposition}{Proposition}

\usepackage[T1]{fontenc}
\usepackage[utf8]{inputenc}
\usepackage{microtype}
\usepackage{inconsolata}
\usepackage{graphicx}
\usepackage{algorithm}
\usepackage{tabularx}
\usepackage{algpseudocode}
\usepackage{booktabs}
\usepackage{afterpage}
\usepackage{multirow}
\usepackage{makecell}
\usepackage{array}
\usepackage{xcolor}
\definecolor{gainGreen}{RGB}{0,128,0}
\definecolor{lossRed}{RGB}{200,0,0}
\definecolor{refRowGray}{RGB}{240,240,240}
\definecolor{methodRowBlue}{RGB}{232,240,250}
\definecolor{spectralRowGold}{RGB}{255,245,220}
\definecolor{definitionBoxGray}{RGB}{246,247,248}
\definecolor{definitionBoxBorder}{RGB}{190,196,202}
\newcommand{\up}[1]{{\small\color{gainGreen}(+#1)}}
\newcommand{\dn}[1]{{\small\color{lossRed}(-#1)}}

\usepackage{subcaption}
\usepackage{enumitem}
\usepackage{url}

\graphicspath{{figures/}}

\newcommand{\shortmethod}{WIDER}
\newcommand{\erank}{\operatorname{erank}}

\newcommand{\sg}{\operatorname{sg}}
\newcommand{\definitionbox}[1]{%
\begin{center}
\begingroup
\setlength{\fboxsep}{4pt}%
\setlength{\fboxrule}{0.45pt}%
\fcolorbox{definitionBoxBorder}{definitionBoxGray}{%
\begin{minipage}{0.86\columnwidth}
#1
\end{minipage}}%
\endgroup
\end{center}}

\title{Think Wider: Mitigating Latent Rank Collapse in Implicit Chain-of-Thought Reasoning}

\author{Yuwen Hao \quad Menglin Yang\thanks{~Corresponding author.} \\
  The Hong Kong University of Science and Technology (Guangzhou) \\
  \texttt{yhao481@connect.hkust-gz.edu.cn} \quad \texttt{menglinyang@hkust-gz.edu.cn} \\
}

\begin{document}
\maketitle
\input{sections/sec0_abstract}

\begin{figure}[t!]
\centering
\includegraphics[width=\columnwidth]{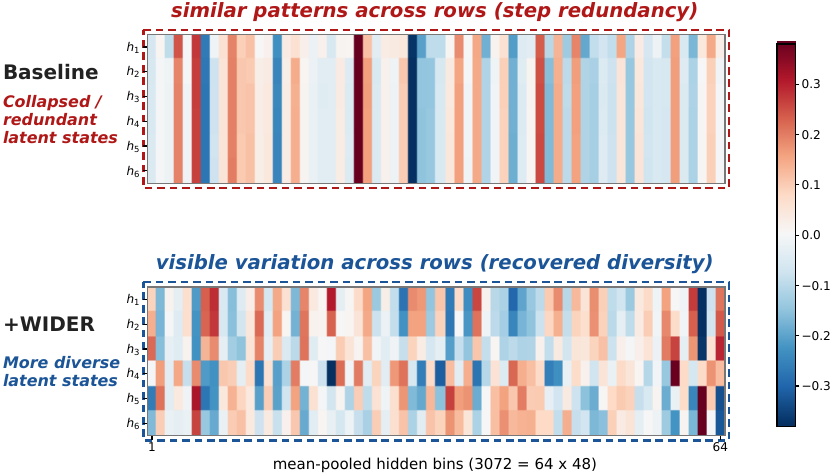}
\caption{Latent rank collapse in implicit CoT. Each panel visualizes one latent
trajectory: rows are six latent steps, and columns are mean-pooled hidden
dimensions. The baseline (\textbf{top}) repeats similar patterns across steps, yielding
a low-rank trajectory (effective rank $1.49$). \shortmethod{} regularization (\textbf{bottom})
suppresses the shared dominant mode and restores stepwise variation (effective
rank $3.81$).}
\label{fig:rank_collapse}
\end{figure}

\input{sections/sec1_introduction}

\input{sections/sec2_related_work}
\input{sections/sec3_rank_collapse}
\input{sections/sec4_method}

\input{sections/sec5_experiment}
\input{sections/sec6_discussion}

\input{sections/sec7_conclusion}
\input{sections/sec8_limitations}
\input{sections/sec9_ethics}
\input{sections/sec10_acknowledgments}

\bibliography{custom}
  
\clearpage
\appendix
\input{appendix/appendix}

\end{document}

%% file: sections/sec0_abstract.tex
\begin{abstract}
Chain-of-thought (CoT) reasoning improves the reasoning ability of large language models by introducing intermediate computation, but explicit rationales increase decoding length, latency, and context cost. Implicit CoT offers a more efficient alternative by moving intermediate reasoning into continuous latent states. However, latent reasoning can be unstable: successive latent states may become overly similar and collapse toward a shared dominant direction, reducing the diversity of the reasoning trajectory. In this work, we identify \emph{latent rank collapse} and propose \textbf{\shortmethod{}}, a lightweight spectral regularizer for implicit CoT. During training, \shortmethod{} estimates the shared direction of each latent trajectory and penalizes projections onto this direction, encouraging latent states to span a broader representational subspace. The method is plug-and-play and leaves the backbone model, latent schedule, and inference-time decoding procedure unchanged. We further formulate this collapse as a geometric bottleneck in implicit reasoning, casting its mitigation as a training-time regularization problem rather than an inference-time decoding change. Extensive experiments show that \shortmethod{} improves matched implicit CoT baselines, while mechanistic analyses reveal higher effective rank, lower dominant-direction energy, and reduced redundancy among latent steps. These results highlight latent subspace utilization as an important factor for efficient continuous reasoning, providing a geometric perspective for analyzing and improving implicit CoT. Code is available at \url{https://github.com/whitesweater/WIDER}.
\end{abstract}

%% file: sections/sec1_introduction.tex
\section{Introduction}

Chain-of-thought (CoT) prompting is a powerful but costly interface for eliciting reasoning in large language models (LLMs). Prior work shows that explicit intermediate steps, from few-shot and zero-shot CoT to self-consistency and least-to-most reasoning, improve arithmetic, commonsense, symbolic, and compositional reasoning \cite{wei2022chain,kojima2022large,zhou2022least,wang2022self,merrill2024expressive}. Yet explicit CoT externalizes computation as natural-language tokens, increasing generated tokens, latency, context usage, and deployment cost \cite{he2026semcot, wang2025synadapt}, while constraining reasoning to the vocabulary space and mixing computation with communicative or textual-coherence tokens \cite{hao2024training,xu2025softcot,shen2025codi,wei2025sim}. These limitations have motivated implicit reasoning, where intermediate computation is kept silent or latent rather than fully verbalized \cite{li2025implicit}.

Implicit CoT addresses this tradeoff by preserving useful intermediate computation while avoiding full rationale decoding.
  Recent work in this direction can be roughly grouped into three categories according to how the latent computation is introduced.
  First, pause, filler, and thought-control token methods provide additional internal computation budget without exposing all intermediate reasoning as final text \cite{goyal2024think,zelikman2024quiet}.
  Second, continuous-thought and compressed-thought methods allow hidden states to directly participate in subsequent reasoning \cite{hao2024training,cheng2024compressed,shen2025codi}.
  Third, distillation, assistant-model, and auxiliary-supervision methods compress the capability of explicit CoT into compact latent trajectories \cite{xu2025softcot,xu2025softcot++,wang2025synadapt,he2026semcot,wei2025sim}.
	  
  Despite this progress, implicit CoT still has a reliability gap: useful latent computation is not necessarily organized into a stable trajectory.
  Although prior studies show that implicit or continuous reasoning can match or surpass explicit CoT in some settings, latent states may also suffer from training instability, semantic homogenization, and diversity collapse, becoming too similar to encode distinct reasoning steps \cite{shen2025codi,xu2025softcot,wang2025synadapt,he2026semcot,wei2025sim}.
  \textbf{This motivates the central question: do multiple latent steps span distinct computational directions, or do they collapse toward a shared mode?}

Representation collapse provides a concise lens for this instability: latent states can concentrate in a narrow subspace instead of using distinct computational directions.
  In Transformers, low-rank behavior has been linked to token uniformity, degraded signal propagation, and vanishing query/key gradients \cite{dong2021attention,noci2022signal,nait2025mind}.
  In representation learning, related failures are commonly diagnosed as redundancy, reduced effective rank, covariance collapse, or dimensional collapse \cite{jing2021understanding,zbontar2021barlow,bardes2021vicreg,chen2026rankup}.
  For implicit CoT, this suggests that adding latent steps does not guarantee added computation; multiple states may instead place excessive energy on the same dominant direction. We refer to this failure mode as \emph{latent rank collapse}.

  \textbf{To mitigate latent rank collapse, we introduce \shortmethod{}, a lightweight spectral regularizer that improves the geometry of implicit CoT during training while leaving inference unchanged.}
  First, it penalizes excessive alignment with the shared dominant direction of each latent trajectory, encouraging latent states to occupy a broader representational subspace.
  Second, it is plug-and-play: it adds no inference-time modules, tokens, or decoding steps, and keeps the backbone, latent schedule, and decoder interface unchanged.
  In matched comparisons, this regularization improves reasoning accuracy and produces latent trajectories with higher effective rank, lower dominant-direction energy, and reduced redundancy, suggesting that the gain comes from better latent organization rather than surface output shifts.

Our contributions are:
\begin{itemize}[leftmargin=1.4em,itemsep=2pt,topsep=2pt]
    \item We formulate \emph{latent rank collapse} as a geometric failure mode of implicit CoT, where multiple latent states overuse a shared dominant direction instead of providing distinct computational directions.
    \item We propose \shortmethod{}, a plug-in spectral regularizer that suppresses this shared direction during training, encourages a broader latent subspace, and leaves the inference procedure unchanged.
    \item Extensive results show that \shortmethod{} improves matched implicit CoT baselines across multiple backbones and reasoning benchmarks, while focused ASDiv-Aug analyses connect the gains to higher effective rank, lower dominant-direction energy, and reduced latent-step redundancy.
\end{itemize}

%% file: sections/sec2_related_work.tex
\section{Related Work}
\label{sec:related-work}

\paragraph{Explicit chain-of-thought reasoning.}
Chain-of-thought (CoT) reasoning shows that large language models benefit from generating intermediate reasoning steps before final answers \cite{wei2022chain}. This idea has been extended by zero-shot reasoning prompts \cite{kojima2022large}, answer aggregation over multiple sampled paths \cite{wang2022self}, and problem decomposition \cite{zhou2022least}. Other work improves or restructures explicit reasoning through rationale bootstrapping \cite{zelikman2022star}, program-aided reasoning \cite{chen2022program}, reasoning-action interaction \cite{yao2022react}, and tree- or graph-structured search over thoughts \cite{yao2023tree,besta2024graph}. Although these methods differ in form, they share the same principle: intermediate computation is useful for complex reasoning. Their main limitation is that such computation is usually externalized as natural-language tokens, which increases latency and context usage while forcing internal states to be verbalized. This motivates latent reasoning methods that keep part of the reasoning process in continuous space.
  
% \paragraph{Implicit and latent reasoning.} 
% Chain-of-thought (CoT) reasoning improves language-model reasoning by generating intermediate steps before final answers \cite{wei2022chain}. This idea has been extended through zero-shot prompting and self-consistency \cite{kojima2022large,wang2022self}, problem decomposition and rationale bootstrapping \cite{zhou2022least,zelikman2022star}, and structured reasoning with programs, actions, trees, or graphs \cite{chen2022program,yao2022react,yao2023tree,besta2024graph}. These methods show the value of intermediate computation, but usually express it as decoded natural-language tokens, which increases latency and context usage and forces internal computation to be verbalized. This motivates latent reasoning methods that keep part of the reasoning process in continuous space.

\textbf{Implicit and latent reasoning.}
Recent work moves intermediate computation into continuous hidden states, avoiding the need to verbalize every reasoning step. Coconut feeds continuous thoughts back into the model, while CCoT compresses explicit rationales into compact latent representations \citep{hao2024training,cheng2024compressed}. CODI transfers explicit-CoT capability into latent trajectories through self-distillation, and SoftCoT generates instance-specific soft thoughts with an auxiliary model \citep{shen2025codi,xu2025softcot}. Other methods use step-level supervision, semantic alignment, or adaptive latent reasoning to improve stability and efficiency \citep{wei2025sim,he2026semcot,wang2025synadapt}. These works mainly study how latent reasoning is constructed or supervised; we instead examine the geometry of the resulting latent trajectory.

\paragraph{Representation collapse and spectral regularization.} 
Representation collapse provides a natural lens for this question. Transformer and neural representations can concentrate in low-dimensional subspaces \cite{dong2021attention,noci2022signal}, a phenomenon also studied as dimensional collapse in self-supervised learning \cite{jing2021understanding}. Anti-collapse methods reduce redundancy, preserve variance, or rebalance spectra \cite{zbontar2021barlow,bardes2021vicreg,zhang2023spectral}, while related studies connect collapse to reasoning stability, spectral gaps, and architectural rank limits \cite{arefin2025seq,saada2024mind,chen2026rankup}. We therefore regularize the singular-value structure of the per-example latent-token matrix, encouraging implicit reasoning states to use a broader subspace without changing the architecture or inference procedure.

%% file: sections/sec3_rank_collapse.tex
\section{Understanding Latent Rank Collapse}
\label{sec:latent-rank-collapse}

We formalize latent rank collapse as a per-example geometric failure mode of
implicit CoT: the model allocates multiple latent steps, but their hidden states
occupy nearly the same direction. For an input $\mathbf{x}$, write the latent
trajectory as
\begin{equation}
    \mathbf{H}(\mathbf{x})=[\mathbf{h}_1(\mathbf{x});\ldots;\mathbf{h}_m(\mathbf{x})]\in\mathbb{R}^{m\times d},
\end{equation}
where each row $\mathbf{h}_i(\mathbf{x})\in\mathbb{R}^d$ is the hidden state read out at the
$i$-th implicit reasoning position before the answer token, following prior work
on latent reasoning \citep{shen2025codi}. Here $m$ is the number of latent
positions and $d$ is the model hidden size. To focus on directional diversity
rather than hidden-state norms, we also use the row-normalized trajectory
\begin{equation}
    \begin{aligned}
    \mathbf{z}_i(\mathbf{x})
    &=\frac{\mathbf{h}_i(\mathbf{x})}{\|\mathbf{h}_i(\mathbf{x})\|_2+\varepsilon}, \\
    \mathbf{Z}(\mathbf{x})
    &=[\mathbf{z}_1(\mathbf{x});\ldots;\mathbf{z}_m(\mathbf{x})].
    \end{aligned}
    \label{eq:rank-collapse-z}
\end{equation}
The central geometric question is whether multiple allocated latent positions occupy a broad set of directions or mostly repeat a shared representation. 
Prior work reports related symptoms in implicit or continuous reasoning, including semantic homogenization across latent steps and reduced latent diversity~\citep{wei2025sim}.

\definitionbox{%
\textbf{Definition.}
We use \emph{latent rank collapse} to describe the failure mode in which the normalized rows in $\mathbf{Z}(\mathbf{x})$ concentrate around a dominant direction $\mathbf{v}_1$. Equivalently, the unnormalized states can often be
viewed as $\mathbf{h}_i(\mathbf{x})\!\approx\!\alpha_i(\mathbf{x})\mathbf{v}_1$,
so latent steps differ mostly by scale and add little new directional
variation. The trajectory need not be exactly rank one; the key issue is that
successive steps use only a small part of the available latent subspace.}
Figure~\ref{fig:rank_collapse_concept} illustrates this geometry: useful latent
computation spreads across directions, whereas collapse makes additional steps
align with a shared mode.

\begin{figure}[t]
    \centering
    \includegraphics[width=\columnwidth]{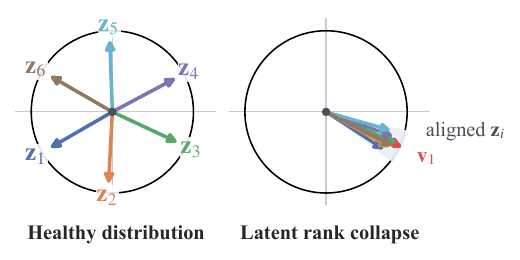}
    \caption{Conceptual illustration of latent rank collapse. Each arrow is a
    unit-length latent state. In the healthy distribution (\textbf{left}), states span many
    directions. In latent rank collapse (\textbf{right}), states concentrate near one
    dominant direction, so hidden states mainly differ by scalar coefficients.}
    \label{fig:rank_collapse_concept}
\end{figure}

\paragraph{Why rank matters.}
$\mathbf{H}(\mathbf{x})$ is the channel through which the decoder receives input-specific computation from the implicit reasoning stage. If all latent steps vary mainly along one direction, the decoder receives more positions but not many independent directions of computation. The conditional family $\{p(y\mid \mathbf{x})\}_{\mathbf{x}}$ is then bottlenecked by the low-rank coordinates carried by the latent trajectory, even when the number of latent steps $m$ is large. Raising the effective rank of $\mathbf{H}(\mathbf{x})$ can
therefore enlarge per-input conditional capacity while leaving the marginal
$p(y)$ unchanged.

\begin{proposition}[Subspace factorization]
\label{prop:capacity-bound}
Let the decoder factor through the latent trajectory as
$p(y\mid \mathbf{x})=g(\mathbf{H}(\mathbf{x}))$. If, for every
$\mathbf{x}\in\mathcal{X}$, the rows of $\mathbf{H}(\mathbf{x})$ lie in a common
$r$-dimensional subspace with basis $\mathbf{U}\in\mathbb{R}^{d\times r}$, so
that $\mathbf{H}(\mathbf{x})=\mathbf{A}(\mathbf{x})\mathbf{U}^\top$ with
$\mathbf{A}(\mathbf{x})\in\mathbb{R}^{m\times r}$, then there exists
$\tilde g:\mathbb{R}^{m\times r}\to\Delta(\mathcal{Y})$ such that
$p(y\mid \mathbf{x})=\tilde g(\mathbf{A}(\mathbf{x}))$ for all $\mathbf{x}$.
In particular, rank-one collapse reduces the family to a function of $m$ scalar
coefficients along one direction.
\end{proposition}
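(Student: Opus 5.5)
The plan is to construct $\tilde g$ explicitly by composition: define $\tilde g(\mathbf{A}) := g(\mathbf{A}\mathbf{U}^\top)$ for every $\mathbf{A}\in\mathbb{R}^{m\times r}$. This is well defined on all of $\mathbb{R}^{m\times r}$ provided $g$ is defined on all of $\mathbb{R}^{m\times d}$, since $\mathbf{A}\mathbf{U}^\top$ is then a legitimate input to $g$. Substituting the hypothesis $\mathbf{H}(\mathbf{x})=\mathbf{A}(\mathbf{x})\mathbf{U}^\top$ gives $\tilde g(\mathbf{A}(\mathbf{x})) = g(\mathbf{H}(\mathbf{x})) = p(y\mid\mathbf{x})$ for every $\mathbf{x}\in\mathcal{X}$, which is the claim.

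The one point to check is the domain of $g$. If $g$ is only specified on the image set $\{\mathbf{H}(\mathbf{x})\}$, I would define $\tilde g$ on the image $\{\mathbf{A}(\mathbf{x})\}$ and extend it arbitrarily, for example by a fixed distribution, elsewhere. For this I need $\mathbf{A}(\mathbf{x})$ to be uniquely determined by $\mathbf{H}(\mathbf{x})$, so that no point of $\mathbb{R}^{m\times r}$ is assigned two different values. Because $\mathbf{U}$ is a basis, its columns are linearly independent, so $\mathbf{U}^\top\mathbf{U}$ is invertible. Hence $\mathbf{A}(\mathbf{x}) = \mathbf{H}(\mathbf{x})\mathbf{U}(\mathbf{U}^\top\mathbf{U})^{-1}$, which is simply $\mathbf{H}(\mathbf{x})\mathbf{U}$ if $\mathbf{U}$ is orthonormal. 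The map $\mathbf{A}\mapsto\mathbf{A}\mathbf{U}^\top$ is therefore injective, and well-definedness follows. This is the only place any care is needed; the rest is substitution.

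For the rank-one corollary, take $r=1$ and $\mathbf{U}=\mathbf{v}_1$. Then $\mathbf{A}(\mathbf{x})=(\alpha_1(\mathbf{x}),\dots,\alpha_m(\mathbf{x}))^\top\in\mathbb{R}^{m\times 1}$, which matches the Definition's form $\mathbf{h}_i\approx\alpha_i\mathbf{v}_1$ taken exactly. Applying the main statement shows that $p(y\mid\mathbf{x})$ depends on $\mathbf{x}$ only through these $m$ scalars.
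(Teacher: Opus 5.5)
Your construction $\tilde g(\mathbf{A}) := g(\mathbf{A}\mathbf{U}^\top)$, followed by substituting $\mathbf{H}(\mathbf{x})=\mathbf{A}(\mathbf{x})\mathbf{U}^\top$, is exactly the paper's one-line proof, and your rank-one case matches the paper's treatment. One small correction to your domain remark: if $g$ is defined only on the image, well-definedness needs only that $\mathbf{A}(\mathbf{x})$ determines $\mathbf{H}(\mathbf{x})$, which is immediate from $\mathbf{H}=\mathbf{A}\mathbf{U}^\top$; the converse injectivity you derive via $(\mathbf{U}^\top\mathbf{U})^{-1}$ is true but not needed.
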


The proof and detailed scope qualifications are given in Appendix~\ref{app:capacity-bound}. Related links between rank and expressiveness also appear in Transformer attention and self-supervised representation learning \citep{merrill2024expressive,dong2021attention,bhojanapalli2020low,nait2025mind,jing2021understanding,bardes2021vicreg}.
Our experiments report shared-direction dominance and step-to-step redundancy
as geometric diagnostics of within-trajectory concentration.

\paragraph{Why collapse can emerge.}
A compact local model provides one possible way in which this geometric pattern can arise. In implicit CoT,
the model repeatedly updates continuous states before decoding the answer
\citep{shen2025codi}. If the local update around a trajectory amplifies one
direction more than others, repeated applications can make successive latent
states increasingly aligned, as in power iteration. We do not assume the model
is globally linear; the point is to motivate measuring the dominant shared
direction and penalizing it during training. Appendix~\ref{app:local-mode} gives
the derivation, and Figure~\ref{fig:rank_collapse} shows the same pattern
empirically: the baseline has nearly identical latent steps with effective rank
$1.49$, while the regularizer restores visible diversity and raises effective
rank to $3.82$.

% \paragraph{Design goal.}
% The regularizer should be per-example, cheap to compute, compatible with the original
% training loss, and absent at inference time. It should not force the model to spread latent
% states arbitrarily. Instead, it should discourage the most direct failure mode: multiple
% latent steps placing excessive energy on the same shared direction.

%% file: sections/sec4_method.tex
\section{Method}
\label{sec:method}

We propose \shortmethod{}, a training-time regularizer for latent rank
collapse in implicit reasoning. For each example, it estimates the dominant
shared direction of the latent trajectory and penalizes projections onto that
direction.

The method follows the geometry in Section~\ref{sec:latent-rank-collapse}: if
successive states overuse one direction, extra latent steps add little new
computation. \shortmethod{} changes only the training objective; the backbone,
latent schedule, and decoding procedure are unchanged.

\subsection{Latent states and row normalization}
\label{subsec:row-normalization}

For each input $\mathbf{x}$, we use the row-normalized trajectory
$\mathbf{Z}(\mathbf{x})$ defined in Eq.~\eqref{eq:rank-collapse-z}. This makes
the auxiliary loss depend on the direction of each latent state rather than on
its absolute norm. The base training loss remains responsible for learning the
answer; the auxiliary loss only controls redundancy among latent steps.

\subsection{Shared-direction proxy}
\label{subsec:shared-direction-proxy}

We estimate the shared direction of the latent trajectory by summing the normalized
states:
\begin{equation}
    \mathbf{c}(\mathbf{x})=\mathbf{Z}(\mathbf{x})^\top\mathbf{1}_m=\sum_{i=1}^{m} \mathbf{z}_i(\mathbf{x}).
\end{equation}
The unit proxy is then
\begin{equation}
    \mathbf{q}(\mathbf{x})=\sg\left(\frac{\mathbf{c}(\mathbf{x})}{\|\mathbf{c}(\mathbf{x})\|_2+\varepsilon}\right),
    \label{eq:q-proxy}
\end{equation}
where $\sg(\cdot)$ denotes stop-gradient. Intuitively, if many latent states point in a
similar direction, their sum identifies that shared component. If the states are already
well diversified, the summed direction is weaker and less dominant.

The proxy in Eq.~\eqref{eq:q-proxy} is not introduced as an exact top singular-vector
estimator. Its role is operational: it is a cheap direction that summarizes what the current
latent steps have in common. Appendix~\ref{app:mean-coherence} relates this column-sum
direction to average pairwise alignment among latent states, and
Appendix~\ref{app:spectral-connection} gives the corresponding spectral interpretation.
Appendix~\ref{app:spiked-direction} further shows that under a spiked latent-step model,
the mean direction and the leading right singular direction identify the same shared
component, which is why this lightweight proxy can replace an SVD inside the training loop.

\subsection{Projection regularization}
\label{subsec:projection-regularization}

The auxiliary loss penalizes the average squared projection onto the shared direction:
\begin{equation}
\begin{aligned}
    \mathcal{L}_{\mathrm{spec}}(\mathbf{x})
    &= \frac{1}{m}\sum_{i=1}^{m}\bigl(\mathbf{z}_i(\mathbf{x})^\top \mathbf{q}(\mathbf{x})\bigr)^2 \\
    &= \frac{1}{m}\|\mathbf{Z}(\mathbf{x})\mathbf{q}(\mathbf{x})\|_2^2 .
\end{aligned}
    \label{eq:spectral-loss}
\end{equation}
The final training objective is
\begin{equation}
    \mathcal{L}
    =
    \mathcal{L}_{\mathrm{base}}
    +
    \lambda_{\mathrm{spec}}
    \mathbb{E}_{\mathbf{x}\sim\mathcal{D}}\left[
    \mathcal{L}_{\mathrm{spec}}(\mathbf{x})
    \right].
    \label{eq:full-training-objective}
\end{equation}
Here $\mathcal{L}_{\mathrm{base}}$ is the original latent-reasoning training objective. The
spectral term does not replace the task loss and does not introduce additional inference
steps. It only discourages the model from using multiple latent positions to encode the
same shared component.

For batch size $B$, latent-step count $m$, and hidden dimension $d$, the added training
cost is $O(Bmd)$. At inference time, the auxiliary loss is removed, so the latent schedule
and decoding procedure are unchanged.

%% file: sections/sec5_experiment.tex
\section{Experiments}
\label{sec:experiments}

\input{tables/main_results}

We evaluate \shortmethod{} with two implicit-reasoning methods, CODI and SIM-CoT, on two language-model backbones and four reasoning benchmarks.
The empirical study is organized to answer two questions in turn: (i) does adding the spectral loss to a matched implicit baseline give a measurable accuracy gain (\S\ref{sec:main_results})? and (ii) is that gain robust to the regularization strength (\S\ref{sec:ablation})? Section~\ref{sec:mechanism} examines how latent geometry changes alongside the accuracy differences.

\subsection{Experimental Setup}

\paragraph{Baselines.}
We use four reasoning configurations as our baselines. Since \shortmethod{} is a plug-in training objective, we apply it to the two implicit-reasoning methods and compare matched configurations that share the same training data and evaluation parser; the only intended difference within each matched pair is whether the \shortmethod{} loss is added.
\textbf{no-CoT} \cite{wei2022chain} is supervised to produce the final answer directly, without generating intermediate reasoning steps.
\textbf{SFT-CoT} \cite{wei2022chain,kojima2022large} is supervised on CoT-annotated data, where the model generates explicit intermediate reasoning steps before the final answer.
\textbf{CODI} \cite{shen2025codi} is an implicit CoT baseline that compresses explicit reasoning into continuous thought states through self-distillation, transferring reasoning ability by aligning hidden activations at a designated answer-side token.
\textbf{SIM-CoT} \cite{shen2025codi,wei2025sim} further adds an auxiliary decoder during training to align each implicit token with its corresponding explicit reasoning step, providing step-level latent supervision while removing the decoder at inference.
The no-CoT and SFT-CoT settings are reported as contextual references. Because they belong to the explicit-output reasoning paradigm, they are not treated as ablations of the \shortmethod{} loss. The causal comparisons are always made between a matched CODI or SIM-CoT baseline and its \shortmethod{}-regularized counterpart.

\paragraph{Backbones.}
Our main experiments use two open-source LLMs.
\textbf{LLaMA-3.2-3B-Instruct} \cite{grattafiori2024llama} serves as the default backbone for the main matched comparisons, the ablation, and the mechanism analysis, while \textbf{Qwen3-1.7B} \cite{yang2025qwen3} probes whether the spectral signal transfers across backbone families and parameter scales. We additionally evaluate \textbf{LLaMA-3.1-8B} \cite{grattafiori2024llama} in the scaling experiment reported in Appendix~\ref{app:llama31-8b}.
Training and adaptation configurations for the main experiments are provided in Appendix~\ref{app:backbone-config}.

\paragraph{Datasets.}
We evaluate on four reasoning benchmarks spanning three task formats, with a shared prompt template and evaluation parser per backbone.
\textbf{StrategyQA} \cite{geva2021did} is a binary yes/no commonsense benchmark.
\textbf{CommonsenseQA} \cite{talmor2019commonsenseqa} is a five-way multiple-choice commonsense benchmark from ConceptNet.
\textbf{ASDiv-Aug} \cite{miao2020diverse} is an arithmetic word-problem benchmark augmented along the standard CODI/Coconut protocol; its clean numerical answer space makes it our focused mechanism-analysis setting.
\textbf{AQuA} \cite{ling2017program} is a five-way multiple-choice algebraic word-problem benchmark.
Together, these benchmarks cover binary, multiple-choice, and open-numeric answer formats, allowing us to examine whether the effect of \shortmethod{} varies with task format in addition to backbone and decoder supervision.

\subsection{Main Results}
\label{sec:main_results}

Table~\ref{tab:main_results} reports the calibrated result matrix across backbones, implicit-reasoning variants, and benchmarks. The no-CoT and SFT-CoT rows are included as reference points for explicit-output reasoning; the causal comparisons are the matched CODI/SIM-CoT pairs with and without \shortmethod{}. We highlight three observations.

\paragraph{Matched averages improve in every implicit setting.}
Across the four backbone--method combinations (LLaMA-3B/Qwen3-1.7B $\times$ CODI/SIM-CoT), adding \shortmethod{} raises average accuracy by $+3.67\%$, $+1.83\%$, $+3.52\%$, and $+4.40\%$ relative to the corresponding matched baseline. This suggests that the signal is not tied to a single backbone family or to one particular implicit-reasoning interface.

\paragraph{The gains are broad.}
At the task-cell level, fifteen of the sixteen matched comparisons improve. The single exception is small: LLaMA-3B SIM-CoT on CommonsenseQA decreases by $0.49$ accuracy points. We therefore treat \shortmethod{} as a broadly useful plug-in objective.

\paragraph{The largest point gains occur on the arithmetic benchmarks. }
The biggest single-cell gain is Qwen SIM-CoT on ASDiv-Aug ($72.35\!\to\!79.19$, $+6.84$ points), followed by Qwen CODI on AQuA ($37.01\!\to\!42.13$, $+5.12$ points) and LLaMA CODI on AQuA ($28.74\!\to\!33.46$, $+4.72$ points). 
% We report this as a descriptive pattern; the association between the implicit--explicit gap and \shortmethod{} gain is inference-sensitive and is not treated as a general law.

\paragraph{Scaling to 8B.}
We additionally evaluate WIDER on LLaMA-3.1-8B with CODI.
WIDER improves accuracy on all four benchmarks
, with gains of $+2.57$, $+0.32$, $+4.05$, and $+1.18$ percentage points on StrategyQA, CommonsenseQA, ASDiv-Aug, and AQuA, respectively.
Effective rank also increases consistently, from $1.715$--$2.092$
under CODI to $3.999$--$5.523$ with WIDER.
Full results are reported in Appendix~\ref{app:llama31-8b}.

\subsection{Ablation Experiments}
\label{sec:ablation}

We sweep the spectral coefficient $\lambda$ on LLaMA-3B + CODI on StrategyQA to verify that the gain is not an artifact of a single hyperparameter choice and to locate the operating regime. Figure~\ref{fig:ablation_lambda} reports the best calibrated accuracy at each value.

\begin{figure}[t]
\centering
\includegraphics[width=\columnwidth]{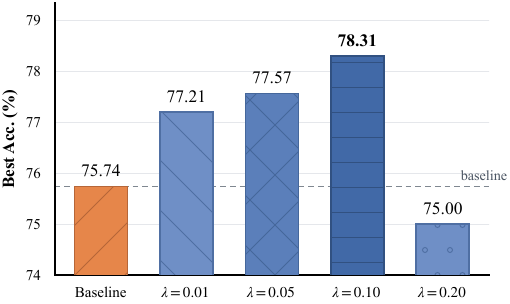}
\caption{Sensitivity to the regularization coefficient $\lambda$ on LLaMA-3B + CODI, StrategyQA. The dashed line marks the baseline ($\lambda=0$); the peak at $\lambda=0.10$ is highlighted.}
\label{fig:ablation_lambda}
\end{figure}

Accuracy improves monotonically as $\lambda$ grows from $0.01$ to $0.10$ and then drops at $\lambda = 0.20$, where the spectral penalty starts to over-regularize the latent trajectory. The peak at $\lambda = 0.10$ is the value used for the matched comparisons in Table~\ref{tab:main_results}; the curve is smooth on both sides of the peak, indicating that the gain is robust to moderate perturbations of $\lambda$ rather than tuned to a knife-edge value.

%% file: tables/main_results.tex
\begin{table*}[!t]
\centering
\normalsize
\renewcommand{\arraystretch}{1.7}
\setlength{\tabcolsep}{6pt}
\resizebox{\textwidth}{!}{
\begin{tabular}{ll cc cc cc cc c}
\toprule
& & \multicolumn{2}{c}{\textbf{StrategyQA}} & \multicolumn{2}{c}{\textbf{CommonsenseQA}} & \multicolumn{2}{c}{\textbf{ASDiv-Aug}} & \multicolumn{2}{c}{\textbf{AQuA}} & \textbf{Avg.} \\
\cmidrule(lr){3-4}\cmidrule(lr){5-6}\cmidrule(lr){7-8}\cmidrule(lr){9-10}
\textbf{Backbone} & \textbf{Method} & Acc & tok & Acc & tok & Acc & tok & Acc & tok & Acc \\
\midrule
\rowcolor{refRowGray} \cellcolor{white}
\multirow{6}{*}[2ex]{\shortstack[c]{LLaMA-\\[0.15ex]3B}}
& no-CoT                  & 68.01           & 3.00   & 79.12             & 3.00  & 88.73             & 4.07  & 49.61              & 3.00  & 71.37 \\
\rowcolor{refRowGray} \cellcolor{white}
& SFT-CoT                 & 78.68           & 125.10 & 77.23             & 80.61 & 93.06             & 12.81 & 47.66              & 75.58 & 74.16 \\
\cmidrule(l){2-11}
& CODI                    & 75.74           & 6.00   & 78.30             & 6.00  & 80.73             & 7.06  & 28.74              & 6.00  & 65.88 \\
\rowcolor{methodRowBlue} \cellcolor{white}
& CODI + \shortmethod{}   & 78.31~\up{3.39\%} & 6.00 & 78.95~\up{0.83\%} & 6.00  & 82.47~\up{2.16\%} & 7.07  & 33.46~\up{16.42\%} & 6.00  & 68.30~\up{3.67\%} \\
& SIM-CoT                 & 73.53           & 6.00   & 79.61             & 6.00  & 80.06             & 7.06  & 27.17              & 5.00  & 65.09 \\
\rowcolor{methodRowBlue} \cellcolor{white}
& SIM-CoT + \shortmethod{} & 75.00~\up{2.00\%} & 6.00 & 79.12~\dn{0.62\%} & 6.04  & 82.27~\up{2.76\%} & 7.06  & 28.74~\up{5.78\%}  & 6.00  & 66.28~\up{1.83\%} \\
\midrule
\midrule
\rowcolor{refRowGray} \cellcolor{white}
\multirow{6}{*}[2ex]{\shortstack[c]{Qwen3-\\[0.15ex]1.7B}}
& no-CoT                  & 61.40           & 3.00   & 73.63             & 3.00  & 85.07             & 5.08  & 44.92              & 3.00  & 66.26 \\
\rowcolor{refRowGray} \cellcolor{white}
& SFT-CoT                 & 69.85           & 126.08 & 72.40             & 79.45 & 91.71             & 17.45 & 47.27              & 78.42 & 70.31 \\
\cmidrule(l){2-11}
& CODI                    & 66.54           & 6.00   & 67.98             & 6.00  & 77.55             & 8.16  & 37.01              & 6.00  & 62.27 \\
\rowcolor{methodRowBlue} \cellcolor{white}
& CODI + \shortmethod{}   & 68.38~\up{2.77\%} & 6.00 & 69.37~\up{2.04\%} & 6.00  & 77.94~\up{0.50\%} & 8.16  & 42.13~\up{13.83\%} & 9.41  & 64.46~\up{3.52\%} \\
& SIM-CoT                 & 63.97           & 6.00   & 67.90             & 6.01  & 72.35             & 8.15  & 41.73              & 6.00  & 61.49 \\
\rowcolor{methodRowBlue} \cellcolor{white}
& SIM-CoT + \shortmethod{} & 64.71~\up{1.16\%} & 6.00 & 68.39~\up{0.72\%} & 6.00  & 79.19~\up{9.45\%} & 8.11  & 44.49~\up{6.61\%} & 6.00  & 64.20~\up{4.40\%} \\
\bottomrule
\end{tabular}
}
\caption{Main results across backbones, methods, and reasoning benchmarks. Each task is shown as two side-by-side sub-columns: \textit{Acc} (accuracy in \%) and \textit{tok} (the per-cell average generation length). \textit{Avg.} is the unweighted mean of the four task accuracies. The first two rows of each backbone are explicit-reasoning references (no-CoT and SFT-CoT). Parentheses on the four \shortmethod{}-regularized rows give the relative improvement of the \shortmethod{} variant over its matched CODI / SIM-CoT baseline in the same backbone (computed as $(v_{\mathrm{spec}}-v_{\mathrm{base}})/v_{\mathrm{base}}$).}
\label{tab:main_results}
\end{table*}

%% file: sections/sec6_discussion.tex
\section{Mechanistic Analysis}
\label{sec:mechanism}

The main results show that \shortmethod{} improves the matched baselines;
this section asks \emph{why}. Unless otherwise specified, the analysis in
this section uses ASDiv-Aug with LLaMA-3B + SIM-CoT.
We organize the analysis around four candidate explanations --- token-level
repetition, dominant-direction shortcut, shortcut content, and a shifted
answer prior --- and ask which of them, if any, the spectral loss actually
moves.

\subsection{Are Latent Tokens Repetitive?}
\label{sec:latent_mi}

The most direct failure mode is token-level repetition: later latent steps
reproduce earlier states. We measure pairwise dependence between the six
latent steps with normalized mutual information (NMI). Latent vectors are
standardized, PCA-projected, and discretized with a shared
MiniBatchKMeans codebook of size $k$; NMI is then computed between latent-step
cluster assignments. Higher off-diagonal NMI means stronger repetition.

\begin{figure}[t]
    \centering
    \includegraphics[width=\columnwidth]{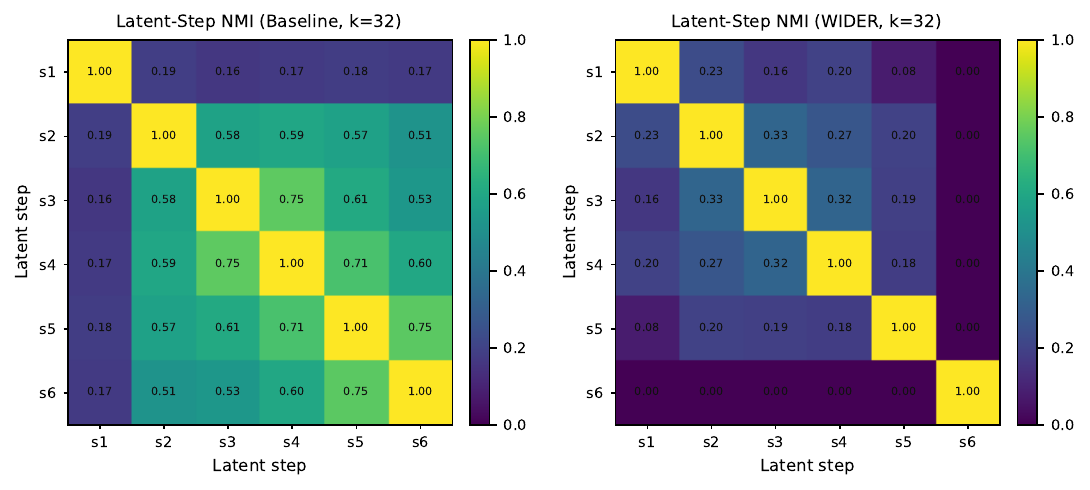}
    \caption{Latent-step NMI heatmaps on ASDiv-Aug (SIM-CoT, $k{=}32$). The
    diagonal is fixed at $1.0$; signal lies in the off-diagonals. The baseline
    (left) shows a dense high-dependence block among later steps; \shortmethod{}
    (right) sharply reduces this block, leaving $s_6$ nearly independent of
    earlier states.}
    \label{fig:asdiv_nmi_heatmap}
\end{figure}

Figure~\ref{fig:asdiv_nmi_heatmap} shows the qualitative shift, and
Table~\ref{tab:asdiv_nmi_mean} the averages: mean pairwise NMI drops from
$0.472$ to $0.144$ at $k{=}32$, and from $0.306$ to $0.142$ at $k{=}16$. The
baseline therefore spends part of its latent-token budget revisiting earlier
states, and \shortmethod{} reclaims that budget by making the steps carry
distinguishable cluster assignments.
A natural next question is whether those now-distinguishable steps
actually span independent directions of the latent space, or merely
re-weight a single shared one.

\begin{table}[t]
\centering
\small
\setlength{\tabcolsep}{5pt}
\begin{tabular}{lcccc}
\toprule
Codebook & Baseline & +\shortmethod{} & $\Delta$ & Ratio \\
\midrule
$k{=}16$ & $0.306$ & \textcolor{gainGreen}{$0.142$} & \textcolor{lossRed}{$-0.164$} & $0.46$ \\
$k{=}32$ & $0.472$ & \textcolor{gainGreen}{$0.144$} & \textcolor{lossRed}{$-0.328$} & $0.31$ \\
\bottomrule
\end{tabular}
\caption{Mean pairwise latent-step NMI on ASDiv-Aug (SIM-CoT). \shortmethod{} roughly
halves NMI at $k{=}16$ and cuts it by $\sim$70\% at $k{=}32$.}
\label{tab:asdiv_nmi_mean}
\end{table}

\subsection{Does \shortmethod{} Release the Trajectory from a Dominant Shortcut?}
\label{sec:dominant_tail}

NMI tests pairwise step repetition; a complementary failure is that all six
steps, while pairwise distinct, still concentrate their energy on a single
shortcut direction. We track this with
$\mathrm{DominantEnergy}=\sigma_1^2/\sum_i\sigma_i^2$ and
$\mathrm{TailEnergy}=1-\mathrm{DominantEnergy}$ over the latent trajectory.

\begin{figure}[t]
    \centering
    \includegraphics[width=1\columnwidth]{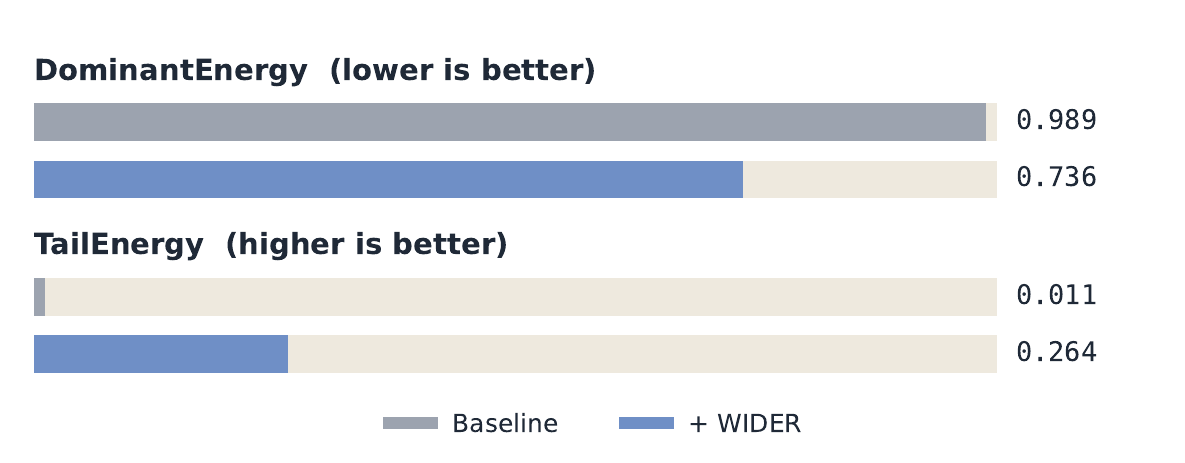}
    \caption{Dominant and tail energy on ASDiv-Aug (SIM-CoT, LLaMA-3B).
    \shortmethod{} redistributes energy from the first singular direction into the
    spectral tail.}
    \label{fig:asdiv_energy}
\end{figure}

Figure~\ref{fig:asdiv_energy} shows the baseline is effectively rank-one: the
top direction carries $0.989$ of the energy, leaving only $0.011$ in the tail.
\shortmethod{} moves the split to $0.736$/$0.264$ --- the latent trajectory now uses
non-dominant directions in earnest rather than routing all computation through
one channel.

\paragraph{Where the gain accrues across latent steps.}
The same redistribution shows up in effective rank, which rises from $1.487$
to $3.817$. To localize \emph{which} steps gain, we compute prefix effective
rank --- the effective rank of the first $t$ states as $t$ grows from $1$ to
$6$. Figure~\ref{fig:prefix_rank} shows the baseline saturates near $1.5$
after the first few steps, while \shortmethod{} keeps accumulating rank through
$s_6$, so later steps contribute new directions instead of re-tracing the
early trajectory. The fact that the rank gain accrues at later steps,
rather than being a uniform offset, is consistent with collapse being a
step-wise drift onto the dominant direction that \shortmethod{} resists
from $s_2$ onward.

\begin{figure}[t]
    \centering
    \includegraphics[width=0.75\columnwidth]{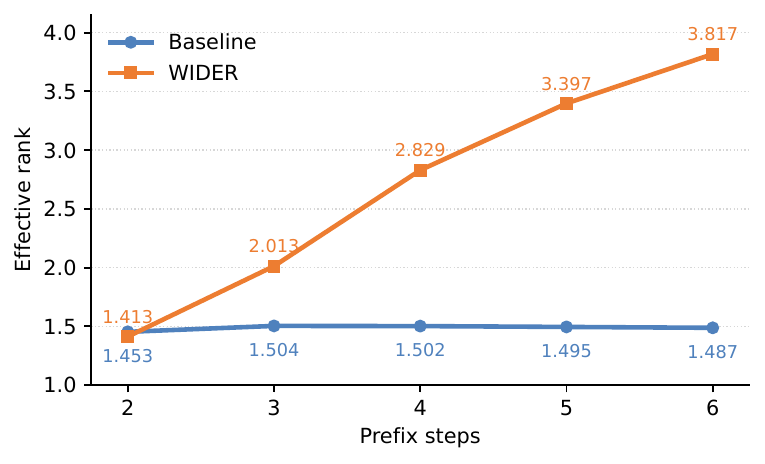}
    \caption{Prefix effective rank on ASDiv-Aug (SIM-CoT), over the first six
    latent states.}
    \label{fig:prefix_rank}
\end{figure}

\subsection{What Is Encoded in the Dominant Direction?}
\label{sec:shortcut_content}

Reducing dominant-direction concentration is only meaningful if that direction
carries low-value signal in the first place. To probe its content, we project
the leading right singular vector of each latent trajectory through the
language-model head, take the top vocabulary items (using absolute scores,
since singular-vector sign is arbitrary), and split them into two buckets:
\emph{content-like} tokens and everything else (punctuation, function words,
numeric literals, and whitespace/special tokens). We report
$\mathrm{ContentShare}$ together with
$\mathrm{ShortcutShare}=1-\mathrm{ContentShare}$.

Table~\ref{tab:asdiv_projection} shows the baseline dominant direction is
$16.03\%$ shortcut-like; after \shortmethod{} this drops to $0.00\%$, with content
share rising correspondingly. So \shortmethod{} does not merely weaken the dominant
direction in magnitude; it also clears away the shortcut-flavored vocabulary
items aligned with it. We treat this as a diagnostic rather than a causal
statement, but it is consistent with the broader picture from
\S\ref{sec:dominant_tail}. The full categorization protocol is given in
Appendix~\ref{app:vocab-projection}.

\begin{table}[t]
\centering
\small
\setlength{\tabcolsep}{5pt}
\begin{tabular}{lcc}
\toprule
Setting & Shortcut\,$\downarrow$ & Content\,$\uparrow$ \\
\midrule
Baseline & $16.03\%$ & $83.97\%$ \\
+\shortmethod{} & \textcolor{gainGreen}{$0.00\%$} & \textcolor{gainGreen}{$100.00\%$} \\
\bottomrule
\end{tabular}
\caption{Dominant-direction vocabulary projection on ASDiv-Aug (SIM-CoT,
LLaMA-3B). Shortcut share is the complement of content share, i.e.\ everything
that is not a content-like token.}
\label{tab:asdiv_projection}
\end{table}

\subsection{Is the Gain a Shifted Answer Prior?}
\label{sec:output_distribution}

A final alternative is that \shortmethod{} simply reshapes the marginal answer
distribution $p(y)$ over the test set --- for instance by flattening a few
high-frequency answers --- and that accuracy moves as a side effect.

Figure~\ref{fig:output_distribution} rules this out: top-1 prediction share,
normalized entropy, unique-prediction rate, and prediction--gold cosine all
move by less than one percentage point in absolute terms and are all smaller
than the accuracy gap. The marginal $p(y)$ is essentially
preserved, while per-example accuracy rises --- so the gain must come from a
more input-discriminative latent trajectory rather than from a redistributed
answer prior.

\paragraph{A three-way consistency argument.}
Marginal $p(y)$ unchanged, per-example accuracy up, and effective rank from
$1.487$ to $3.817$ together admit one reading: the gain is a drop in average
$-\log p(y^\star\mid \mathbf{x})$ enabled by a higher-rank, more input-discriminative
$\mathbf{H}(\mathbf{x})$, not a reshuffled prior. This matches the pattern in
\S\ref{sec:main_results}: the larger gains appear on AQuA and ASDiv-Aug, whose
answer space most stresses the conditional bottleneck, while binary
StrategyQA --- where one bit of input-specific signal often suffices ---
shows smaller deltas.

\input{tables/output_distribution}

%% file: tables/output_distribution.tex
\begin{figure}[t]
\centering
\includegraphics[width=\columnwidth]{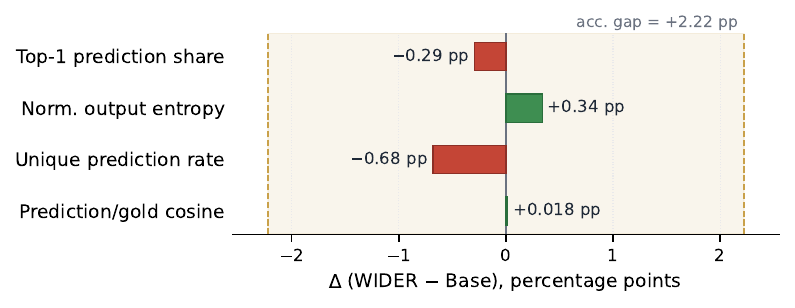}
\caption{Output-distribution shifts on ASDiv-Aug (SIM-CoT). All four
marginal-$p(y)$ metrics move by less than one percentage point and are all
smaller than the $+2.22$\,pp accuracy gap (dashed reference band).}
\label{fig:output_distribution}
\end{figure}

%% file: sections/sec7_conclusion.tex
\section{Conclusion}

Implicit chain-of-thought moves intermediate computation into continuous latent states, but successive states can collapse toward a shared direction, leaving additional latent positions underused. We identified this as \emph{latent rank collapse} --- a per-example geometric failure that bottlenecks $\{p(y\mid\mathbf{x})\}$ by the rank of the latent trajectory --- and proposed \shortmethod{}, a training-time spectral regularizer that penalizes projections onto this shared direction. Across multiple backbones and reasoning benchmarks, \shortmethod{} delivers consistent accuracy gains over matched implicit-CoT baselines. These results highlight latent subspace utilization as a key factor for efficient implicit reasoning, and offer a geometric perspective for analyzing and improving implicit chain-of-thought.

%% file: sections/sec8_limitations.tex
\newpage
\section*{Limitations}

This study has several limitations. First, our experiments evaluate models
up to $8$B parameters across a limited set of reasoning benchmarks. Whether
the observed benefits persist for substantially larger models and
larger-scale training regimes remains to be explored.
  Second, our current experiments use a fixed trajectory of six latent tokens.
Investigating how latent rank collapse and \shortmethod{} behave with longer
or dynamically allocated latent trajectories is an interesting direction for
future work.
Third, because \shortmethod{} is designed as a plug-in objective for latent
reasoning, it inherits the interpretability limitations of latent reasoning
itself. Although our geometric diagnostics characterize the organization of
the latent trajectory, they do not make individual latent tokens directly
human-interpretable or identify the specific reasoning role performed by each
token.

%% file: sections/sec9_ethics.tex
\section*{Ethics Statement}

This paper presents work whose goal is to advance the field of Machine Learning
by addressing \emph{latent rank collapse} in implicit chain-of-thought
reasoning. The proposed spectral regularizer is a training-time mechanism that
encourages latent trajectories to span a broader subspace, and it leaves the
backbone architecture and inference procedure unchanged.

Because the method adds no extra tokens or modules at inference, it does not
increase deployment-time compute, latency, or energy use relative to the
implicit-CoT baseline. However, it targets only the geometry of latent
reasoning and does not by itself address factual correctness, social bias, or
harmful content; models trained with it may still produce incorrect or biased
answers. All datasets used are publicly available reasoning benchmarks and
involve no human subjects or personally identifiable information.

%% file: sections/sec10_acknowledgments.tex
\section*{Acknowledgments}

We thank the anonymous reviewers and area chairs for their constructive feedback.
This work was partially supported by the Guangdong Basic and Applied Basic
Research Foundation (Grant No.~2025A1515140110) and the Research Travel Grant
from the Base of Red Bird MPhil (RBM) at The Hong Kong University of Science and
Technology (Guangzhou). We would like to thank our RBM Project Supervisor Lanhui Li for academic support.

%% file: appendix/appendix.tex
\section{Additional Mathematical Details}
\label{app:math-details}

This appendix collects the short derivations and scope qualifications behind the geometric interpretations in the
main text. Section~\ref{app:capacity-bound} gives the common-subspace factorization
and a local power-iteration illustration used in Section~\ref{sec:latent-rank-collapse};
Section~\ref{app:mean-coherence} justifies the mean-direction proxy and the
projection loss in Section~\ref{sec:method}. None of the arguments assumes that
the full nonlinear model is globally linear.

\subsection{\texorpdfstring{Common-subspace factorization and a local concentration scenario}{Common-subspace factorization and a local concentration scenario}}
\label{app:capacity-bound}
\label{app:local-mode}

\paragraph{Common-subspace factorization.}
Proposition~\ref{prop:capacity-bound} in the main text states an exact
reduced-coordinate factorization under a single fixed subspace shared across inputs. In the rank-one setting
$\mathbf{h}_i(\mathbf{x})=\alpha_i(\mathbf{x})\mathbf{v}_1$, the trajectory
matrix becomes
$\mathbf{H}(\mathbf{x})=\boldsymbol{\alpha}(\mathbf{x})\mathbf{v}_1^\top$ and
is represented relative to the fixed direction $\mathbf{v}_1$ by the $m$ scalar coordinates
$\boldsymbol{\alpha}(\mathbf{x})$, regardless of how large $d$ is.

\begin{proof}[Proof of Proposition~\ref{prop:capacity-bound}]
$\mathbf{U}$ is fixed, so $\mathbf{H}(\mathbf{x})$ is determined by $\mathbf{A}(\mathbf{x})$ via $\mathbf{H}(\mathbf{x})=\mathbf{A}(\mathbf{x})\mathbf{U}^\top$; take
$\tilde g(\mathbf{A})=g(\mathbf{A}\mathbf{U}^\top)$.
\end{proof}

\paragraph{Effect of row normalization.}
With the $\varepsilon$-stabilized normalization in Eq.~\eqref{eq:rank-collapse-z},
the normalized trajectory can be written as
$\mathbf{Z}(\mathbf{x})=\mathbf{D}(\mathbf{x})\mathbf{H}(\mathbf{x})$, where
$\mathbf{D}(\mathbf{x})$ is a positive invertible diagonal matrix. This rescaling
preserves the row span and algebraic rank while reducing the diagnostics'
sensitivity to variation in row norms.

The proposition is an exact reduced-coordinate reparameterization under a
single fixed subspace shared across inputs, not an information-theoretic bound on
real-valued coordinates.  Our trajectory diagnostics (effective rank, dominant-energy ratio, and latent-step
cosine) are computed separately for each example; NMI is instead an across-example
statistic of dependence between latent-step assignments. The concentrated subspace may vary with $\mathbf{x}$. The trajectory metrics
diagnose within-trajectory geometric concentration; they do not directly measure
semantic reasoning capacity. Related
rank--expressiveness arguments appear in the Transformer-expressiveness line
\citep{merrill2024expressive,dong2021attention,bhojanapalli2020low,nait2025mind}
and in the self-supervised representation-collapse line
\citep{jing2021understanding,bardes2021vicreg,zbontar2021barlow}.

\paragraph{Local power-iteration picture.}
For a fixed $\mathbf{x}$, write deviations $\boldsymbol{\delta}_t=\mathbf{h}_t-\mathbf{h}^\star$ around a local reference
state $\mathbf{h}^\star$ and suppose the local update is approximated by
$\boldsymbol{\delta}_{t+1}=\mathbf{A}_{\mathbf{x}}\boldsymbol{\delta}_t$, with $\mathbf{A}_{\mathbf{x}}$ diagonalizable, eigenvalues
$|\lambda_1|>|\lambda_2|\geq\cdots\geq|\lambda_d|$, and $\boldsymbol{\delta}_1$ having a
nonzero component on $\mathbf{v}_1$. Expanding $\boldsymbol{\delta}_1=\sum_k c_k \mathbf{v}_k$ gives
$\boldsymbol{\delta}_t=\sum_k c_k\lambda_k^{t-1}\mathbf{v}_k$, so after normalization
\begin{equation}
    \frac{\boldsymbol{\delta}_t}{\|\boldsymbol{\delta}_t\|_2}\rightarrow
    \pm\frac{\mathbf{v}_1}{\|\mathbf{v}_1\|_2},
\end{equation}
at rate controlled by $|\lambda_2/\lambda_1|$ per step. Under these stated assumptions, repeated application of
the same local update is one possible geometric route by which additional latent
steps occupy few new directions in representation space.

\subsection{Justifying the mean-direction proxy and the projection loss}
\label{app:mean-coherence}
\label{app:spectral-connection}
\label{app:spiked-direction}

Let $\mathbf{Z}\in\mathbb{R}^{m\times d}$ be the row-normalized latent trajectory and
$\bar{\mathbf{z}}=\frac{1}{m}\sum_{i=1}^m \mathbf{z}_i$. Three short arguments show why penalizing
projections onto the normalized mean direction is principled.

\paragraph{Mean direction summarizes pairwise alignment.}
For the $\varepsilon$-stabilized rows,
\begin{equation}
    \|\bar{\mathbf{z}}\|_2^2=\frac{1}{m^2}\sum_{i,j}\mathbf{z}_i^\top \mathbf{z}_j .
\end{equation}
The squared norm of the mean direction is therefore an exact summary of
average pairwise dot products. Because
$\|\mathbf{z}_i\|_2=\|\mathbf{h}_i\|_2/(\|\mathbf{h}_i\|_2+\varepsilon)$,
this is a norm-weighted cosine identity; it coincides with average pairwise
cosine alignment for exactly unit-normalized nonzero rows and approximates it
when $\varepsilon$ is negligible relative to their norms. This is why the column-sum
direction in Eq.~\eqref{eq:q-proxy} is a natural proxy for what the latent steps
share. For any unit $\mathbf{q}$ and $a_i=\mathbf{z}_i^\top \mathbf{q}$, Jensen's inequality gives
\begin{equation}
    \frac{1}{m}\sum_i a_i^2\geq\left(\frac{1}{m}\sum_i a_i\right)^2,
\end{equation}
which equals $\|\bar{\mathbf{z}}\|_2^2$ when $\mathbf{q}$ is the normalized mean. The projection
loss therefore upper-bounds the squared strength of the shared mean direction.

\paragraph{Spectral interpretation.}
Define $L_{\mathbf{q}}(\mathbf{Z})=\frac{1}{m}\|\mathbf{Z}\mathbf{q}\|_2^2=\frac{1}{m}\mathbf{q}^\top \mathbf{Z}^\top \mathbf{Z}\mathbf{q}$, let $\mathbf{v}_1$ be a
top right singular vector of $\mathbf{Z}$, and set $\rho_1(\mathbf{Z})=\sigma_1(\mathbf{Z})^2/m$ and
$\gamma_{\mathbf{q}}=\langle \mathbf{q},\mathbf{v}_1\rangle^2$. Then
\begin{equation}
    \gamma_{\mathbf{q}}\rho_1(\mathbf{Z})\leq L_{\mathbf{q}}(\mathbf{Z})\leq\rho_1(\mathbf{Z}),
\end{equation}
where the upper bound is the Rayleigh quotient bound and the lower bound follows
by expanding $\mathbf{q}$ in the right-singular basis and retaining the $\mathbf{v}_1$ term. When
the proxy $\mathbf{q}(\mathbf{x})$ aligns with $\mathbf{v}_1$, the loss suppresses the leading spectral
component; when alignment is weak, the loss still penalizes the mean shared
direction but should not be read as a complete surrogate for the full spectrum.

% Temporarily hidden: extended related anti-collapse comparison.
% \input{appendix/tables/related_anti_collapse}

\paragraph{Spiked model: mean direction equals top singular direction.}
Let $\mathbf{z}_i=a_i \mathbf{u}+\boldsymbol{\xi}_i$, with $\mathbf{u}$ a unit shared direction, $\mathbf{u}^\top\boldsymbol{\xi}_i=0$,
$\mathbb{E}[\boldsymbol{\xi}_i]=0$, and isotropic residual covariance in the orthogonal
complement. Then $\mathbb{E}[\bar{\mathbf{z}}]=\mathbb{E}[a_i]\mathbf{u}$, and
\begin{equation}
\begin{split}
\mathbb{E}\!\left[\tfrac{1}{m}\mathbf{Z}^\top \mathbf{Z}\right]
={}& \mathbb{E}[a_i^2]\mathbf{u}\mathbf{u}^\top
   + \frac{\tau^2}{d-1}(\mathbf{I}-\mathbf{u}\mathbf{u}^\top).
\end{split}
\end{equation}
Whenever $\mathbb{E}[a_i^2]>\tau^2/(d-1)$, the population covariance has a
leading spike along $\mathbf{u}$, and the normalized mean direction and the leading right
singular direction identify the same shared component. This is why a lightweight
mean-direction proxy can replace an SVD inside the training loop.

\subsection{Signed-mean cancellation as a boundary case}
\label{app:proxy-audit}

A natural concern is a theoretical boundary case in which opposite directions cancel in the signed mean despite a rank-one trajectory. Let $\mathbf{z}_i=s_i\mathbf{u}$, where $s_i\in\{-1,+1\}$ and $\mathbf{u}$ is a unit vector. Since $\mathbf{Z}=\mathbf{s}\mathbf{u}^{\top}$, the trajectory remains rank one and SVD recovers $\mathbf{u}$ up to sign. For even $m$, however, equal numbers of positive and negative signs give $\sum_i\mathbf{z}_i=\mathbf{0}$; the underlying unit mean direction is then undefined, and the $\varepsilon$-stabilized proxy in Eq.~\eqref{eq:q-proxy} becomes the zero vector. With small off-axis perturbations, near balance can instead make the mean direction dominated by the perturbation residual and therefore unrepresentative of the leading rank-one axis.

This construction requires exact sign symmetry and is therefore an idealized boundary case; whether an approximate version occurs in practical trajectories is an empirical question. To distinguish spectral concentration from signed-mean cancellation, we use two quantities. First, $\rho_1=\sigma_1(\mathbf{Z})^2/m$ is the leading singular energy per state; for exactly unit-normalized rows, it equals the fraction of total trajectory energy captured by the leading singular direction, so $\rho_1$ near one indicates near-rank-one structure. Second, $R=\|m^{-1}\sum_i\mathbf{z}_i\|_2$ is the magnitude of the signed mean, so $R$ near zero indicates strong directional cancellation. We audit all $1{,}038$ ASDiv-Aug trajectories from the same LLaMA-3B + SIM-CoT checkpoint pair used in Section~\ref{sec:mechanism}. The audit uses all row-normalized states seen by the training loss.

We operationally flag near-rank-one cancellation when $\rho_1\geq0.9$ and $R\leq0.1195$: the first condition requires at least approximately $90\%$ of the trajectory energy to lie in one direction, whereas the second requires the signed mean to be unusually small. For seven exactly unit-normalized rows, $R^2=(1+6\bar c)/7$, where $\bar c$ is the average signed cosine over the $21$ unordered state pairs; hence $R\leq0.1195$ is approximately equivalent to $\bar c\leq-0.15$, or $90\%$ of the way from zero correlation to the general zero-mean limit $-1/6$. No trajectory qualifies in either the baseline checkpoint or the trained \shortmethod{} checkpoint ($0/1{,}038$ in each). Under this criterion, no near-rank-one cancellation failure is observed among the audited trajectories.
\afterpage{\input{appendix/tables/dataset_examples}}

% Temporarily hidden: Appendix B, Related Anti-Collapse Methods.
% \section{Related Anti-Collapse Methods}
% \label{app:related-anti-collapse}
%
% Section~\ref{sec:related-work} situates \shortmethod{} within
% representation-collapse methods at the family level. Table~\ref{tab:related}
% gives the extended comparison: for each family we list its main anti-collapse
% mechanism, the primary theoretical lens used in the literature, and how it
% relates to latent rank collapse in implicit reasoning models. The table is a
% positioning aid for readers familiar with self-supervised or
% representation-learning anti-collapse methods; it is not a benchmark
% comparison.

\section{Datasets}
\label{app:datasets}

Table~\ref{tab:train_datasets} summarizes the train/test sizes of the four
reasoning datasets used in Section~\ref{sec:experiments}, and
Table~\ref{tab:dataset_examples} shows one representative example per dataset
to illustrate the input--output formats. The same splits are used across all
backbones and method variants. Table~\ref{tab:asset_licenses} lists these
datasets together with the pre-trained models used in this work and their
licenses or terms of use; we use each asset in a manner consistent with its
license, restrict usage to non-commercial academic research where required,
and do not redistribute any underlying asset.

\input{appendix/tables/train_datasets}

\begin{table}[t]
\centering
\small
\caption{Existing assets used in this work and their licenses or terms of use.}
\label{tab:asset_licenses}
\begin{tabularx}{\linewidth}{@{}lX@{}}
\toprule
Asset & License / Terms \\
\midrule
StrategyQA & MIT License \\
CommonsenseQA & MIT License \\
ASDiv-Aug & CC BY-NC-ND 4.0 for the ASDiv-Aug release; CC BY-NC 4.0 for the original ASDiv dataset \\
AQuA & Apache-2.0 License \\
LLaMA-3.2-3B-Instruct & Llama 3.2 Community License and Acceptable Use Policy \\
LLaMA-3.1-8B & Llama 3.1 Community License and Acceptable Use Policy \\
Qwen3-1.7B & Apache-2.0 License \\
\bottomrule
\end{tabularx}
\end{table}

\section{Backbone and Training Configuration}
\label{app:backbone-config}
\label{app:implementation-details}

The main experiments are organized as matched comparisons: within each CODI or
SIM-CoT pair the backbone, dataset split, prompt template, parser, latent-step
schedule, training data, and checkpoint-selection rule are held fixed, and the
only intended difference is whether the spectral auxiliary loss in
Eq.~\eqref{eq:full-training-objective} is added. The no-CoT and SFT-CoT rows in
Table~\ref{tab:main_results} are contextual references, not ablations of the
spectral objective. We use \textbf{LLaMA-3.2-3B-Instruct} as the default
backbone for the main matched comparisons, the coefficient ablation, and the
ASDiv-Aug mechanism analysis, and \textbf{Qwen3-1.7B} as a second backbone
family and scale. For each backbone--dataset setting, the same evaluation
parser is used across rows so that accuracy differences are not caused by
answer extraction. The spectral coefficient is selected from the sweep in
Section~\ref{sec:ablation}. The auxiliary loss is active only during training;
at inference, the latent schedule and decoding procedure are unchanged, so
token counts in Table~\ref{tab:main_results} reflect the underlying reasoning
configuration rather than an additional spectral module. The concrete training
recipe below records the optimizer, schedule, and batching shared by each
baseline and its \shortmethod{} variant, so that the matched comparisons in
Section~\ref{sec:experiments} can be reproduced end-to-end.

\paragraph{CODI Training Setup.}
For CODI, we follow the configuration in \citet{shen2025codi}. The effective
batch size is set to $128$, and all experiments are conducted using
\texttt{bfloat16} precision. We train for $10$ epochs. The learning rate is
set according to the backbone: $1\mathrm{e}{-}5$ for LLaMA-3.2-3B-Instruct
and $1\mathrm{e}{-}4$ for Qwen3-1.7B.

\paragraph{SIM-CoT Training Setup.}
We reproduce SIM-CoT by implementing its method within the CODI
framework, keeping the effective batch size, precision, backbone-specific
learning rates, learning-rate schedule, and number of epochs identical to the
CODI setup above.

\paragraph{Hardware Information.}
We perform all experiments on multiple machines equipped with NVIDIA H800
80 GB GPUs. The software stack uses CUDA version $12.8$.

\section{Results on LLaMA-3.1-8B}
\label{app:llama31-8b}

We extend our experiments to LLaMA-3.1-8B using CODI as the implicit-CoT
baseline and the same matched training and evaluation protocol. As shown in
Table~\ref{tab:llama31-8b-accuracy}, \shortmethod{} improves accuracy on all
four benchmarks, with gains ranging from $0.32$ to $4.05$ percentage points.

\begin{table}[t]
\centering
\small
\begin{tabular}{lrrr}
\toprule
\textbf{Dataset} & \textbf{CODI} & \textbf{+\shortmethod{}} & \textbf{$\Delta$} \\
\midrule
StrategyQA    & 77.94 & \textbf{80.51} & +2.57 \\
CommonsenseQA & 82.56 & \textbf{82.88} & +0.32 \\
ASDiv-Aug     & 85.55 & \textbf{89.60} & +4.05 \\
AQuA          & 27.17 & \textbf{28.35} & +1.18 \\
\bottomrule
\end{tabular}
\caption{Accuracy (\%) on LLaMA-3.1-8B with CODI as the matched implicit-CoT
baseline. $\Delta$ denotes the absolute gain in percentage points.}
\label{tab:llama31-8b-accuracy}
\end{table}

We further test whether the same geometric pattern appears in this larger
backbone by computing the effective rank of the first six latent states.
Table~\ref{tab:llama31-8b-erank} shows that \shortmethod{} substantially raises
effective rank on every dataset.

\begin{table}[t]
\centering
\small
\begin{tabular}{lrr}
\toprule
\textbf{Dataset} & \textbf{CODI $\erank$} & \textbf{+\shortmethod{} $\erank$} \\
\midrule
StrategyQA    & 1.774 & \textbf{4.057} \\
CommonsenseQA & 2.092 & \textbf{5.523} \\
ASDiv-Aug     & 1.715 & \textbf{3.999} \\
AQuA          & 1.960 & \textbf{4.771} \\
\bottomrule
\end{tabular}
\caption{Effective rank of the first six latent states on LLaMA-3.1-8B.}
\label{tab:llama31-8b-erank}
\end{table}

These results show that latent rank collapse persists beyond compact models.
A larger backbone provides a wider representation space, but successive latent
reasoning states can still concentrate along only a few directions. The higher
effective ranks under \shortmethod{} indicate broader trajectory subspaces,
rather than an effect obtained merely by adding model capacity. This finding
supports our original analysis: latent rank collapse arises when successive
latent states align with a shared dominant direction, rather than merely when
a model has too few parameters.

\section{Dominant-Direction Vocabulary Projection Protocol}
\label{app:vocab-projection}

For each example we stack the first six implicit states into
$\mathbf{H}(\mathbf{x})\in\mathbb{R}^{6\times d}$, take its first right singular vector $\mathbf{v}_1(\mathbf{x})$,
and score the vocabulary by $\mathbf{s}(\mathbf{x})=\mathbf{W}_{\mathrm{LM}}\mathbf{v}_1(\mathbf{x})$. Since the sign of
$\mathbf{v}_1(\mathbf{x})$ is arbitrary, we rank by $|\mathbf{s}(\mathbf{x})|$ and keep the top $N=30$ tokens per
sample. Each token is deterministically assigned to one of five surface-form
categories---\emph{punctuation}, \emph{function word} (closed-class English),
\emph{number} (pure numeric literal), \emph{whitespace/special} (whitespace,
byte-level artifacts, control tokens), and \emph{content-like} (everything
else)---using the same categorizer for baseline and Spectral models. We report
per-sample slot shares averaged across samples, and summarize via
$\mathrm{ContentShare}$ (the content-like share) and
$\mathrm{ShortcutShare}=1-\mathrm{ContentShare}$, where shortcut share combines
punctuation, function-word, number, and whitespace/special shares, so the two
summaries sum to $100\%$. The diagnostic indicates which vocabulary region the dominant
singular direction occupies; it is not a claim that individual top-projected
tokens are semantically meaningful.

\paragraph{Aggregation and reporting.}
For every checkpoint we compute the five surface-form shares on a held-out
subset of evaluation prompts, then average across prompts to obtain a single
per-checkpoint share vector. Reported figures pair each baseline with its
matched \shortmethod{} variant on the same backbone, dataset, and
evaluation subset, so the absolute share values are comparable across rows.
We do not threshold or filter tokens beyond the top-$N$ rank, and we do not
re-weight categories; differences between paired rows therefore reflect a
direct shift in where the dominant singular direction concentrates, rather
than any post-hoc selection of which tokens to count.

\paragraph{Scope and caveats.}
The protocol is a diagnostic of where the dominant latent direction
projects in the vocabulary, not a token-level interpretation. Two cautions
apply. First, the sign of $\mathbf{v}_1(\mathbf{x})$ is arbitrary, so the
projection magnitude $|\mathbf{s}(\mathbf{x})|$ is the only meaningful
quantity at the token level; individual rankings are not invariant under
unit-norm reparameterizations of nearby singular directions. Second, the
surface-form categorizer is deterministic but coarse: it does not
distinguish between, e.g., content-like tokens that are stop-word
collocations and those that carry task-specific signal. We therefore
interpret the diagnostic as a population-level statement about which
vocabulary region absorbs latent energy, rather than as an account of which
specific tokens drive any given prediction.

\paragraph{Implementation and comparability.}
We compute $\mathbf{s}(\mathbf{x})$ in full precision after casting the
latent states and the unembedding $\mathbf{W}_{\mathrm{LM}}$ to
\texttt{float32}, so that ranking decisions do not depend on the
\texttt{bfloat16} training precision used elsewhere in the pipeline. The
top-$N$ cutoff is fixed at $N{=}30$ across all models and datasets, and the
same prompt subset is reused for every checkpoint of a given
backbone--dataset pair, so paired rows in the main tables differ only in
which model produced the latent states rather than in which prompts or
ranks were inspected. When a token spans multiple byte-pair pieces, each
piece is categorized independently and contributes to the slot share of its
own surface form; we do not merge subword pieces back into whole words,
because doing so would require a tokenizer-specific reassembly step that
behaves differently across backbones and would obscure the
backbone-agnostic comparison we report. The full categorizer, including
the regular expressions used to detect numeric literals and the closed
list of function words, is released with the code so that all share values
in the main text can be recomputed from a released checkpoint without
re-implementing any heuristics.

%% file: appendix/tables/dataset_examples.tex
\begin{table*}[!t]
\centering
\small
\setlength{\tabcolsep}{5pt}
\renewcommand{\arraystretch}{1.15}
\begin{tabular}{@{}>{\raggedright\arraybackslash}p{0.11\textwidth} >{\raggedright\arraybackslash\hyphenpenalty=50\relax}p{0.34\textwidth} >{\centering\arraybackslash}p{0.04\textwidth} >{\raggedright\arraybackslash}p{0.44\textwidth}@{}}
\toprule
\textbf{Dataset} & \textbf{Question} & \textbf{Ans.} & \textbf{CoT} \\
\midrule
StrategyQA &
Are more people today related to Genghis Khan than Julius Caesar? &
True &
(1) Genghis Khan is known to have had many children and thousands of descendants across Asia.
(2) Genetic studies suggest millions of men carry a Y-chromosome traceable to him.
(3) Julius Caesar does not have a similarly documented wide-spanning direct lineage.
(4) Historical and genetic evidence does not indicate a comparable descendant impact for Caesar. \\
\midrule
Commonsense-QA &
The sanctions against the school were a punishing blow, and they seemed to \underline{\hspace{1.2em}} what efforts the school had made to change?
Choices: (A) ignore (B) enforce (C) authoritarian (D) yell at (E) avoid &
A &
The sanctions are undermining or dismissing the school's efforts. ``Ignore'' fits best, conveying that the sanctions do not acknowledge those efforts. \\
\midrule
ASDiv-Aug &
gino has 64 popsicle sticks. i have 100 popsicle sticks. what is the sum of our popsicle sticks? &
164 &
$\langle\langle 64+100=164 \rangle\rangle$ \\
\midrule
AQuA &
Two friends walk along a 43-km trail, starting at opposite ends at the same time. If P's rate is 15\% faster than Q's, how many km will P have walked when they pass?
Choices: (A) 21 (B) 21.5 (C) 22 (D) 22.5 (E) 23 &
E &
If Q walks $x$ km, P walks $1.15x$ km. Then $x+1.15x=43\Rightarrow 2.15x=43\Rightarrow x=20$, so P walks $1.15\times20=23$ km. The answer is E. \\
\bottomrule
\end{tabular}
\caption{Representative examples from each training dataset, illustrating the input--output formats spanned by the four benchmarks.}
\label{tab:dataset_examples}
\end{table*}

%% file: appendix/tables/train_datasets.tex
\begin{center}
\small
\setlength{\tabcolsep}{6pt}
\renewcommand{\arraystretch}{1.1}
\begin{tabular}{lrr}
\toprule
\textbf{Dataset} & \textbf{Train} & \textbf{Test} \\
\midrule
StrategyQA      & 1{,}537  & 272     \\
CommonsenseQA   & 8{,}196  & 1{,}221 \\
ASDiv-Aug       & 4{,}183  & 1{,}038 \\
AQuA            & 97{,}467 & 254     \\
\bottomrule
\end{tabular}
\captionof{table}{Training and test sizes for the four reasoning datasets.}
\label{tab:train_datasets}
\end{center}